\theoremstyle{plain} 
\theoremstyle{plain} \newtheorem{corollary}{Corollary}[section]
\theoremstyle{plain} 
\theoremstyle{plain} 
\theoremstyle{plain} 
\theoremstyle{plain} 
\theoremstyle{plain} \newtheorem{lemma}{Lemma}[section]
\theoremstyle{plain} 
\newenvironment{proof}[1][Proof]{\begin{trivlist}
\item[\hskip \labelsep {\bfseries #1}]}{\end{trivlist}}
\newcommand{\qed}{\nobreak \ifvmode \relax \else
      \ifdim\lastskip<1.5em \hskip-\lastskip
      \hskip1.5em plus0em minus0.5em \fi \nobreak
      \vrule height0.5em width0.5em depth0.25em\fi}
\def \CD {{\mathcal{D}}}
\def \D {\Delta}
\def\argmax{\mathop{\rm arg\,max}}
\newcommand \E {\mathop{\mbox{\bf{E}}}\nolimits}
\renewcommand \Pr {\mathop{\mbox{\bf{P}}}\nolimits}
\newcommand \meas[1]{\mu\left\{#1\right\}}
\newcommand \perr{\delta}%%p_{\err}}
\def\equalsfill{$\m@th\mathord=\mkern-7mu
  \cleaders\hbox{$\!\mathord=\!$}\hfill
  \mkern-7mu\mathord=$}
\newcommand \defn {\triangleq}
\newcommand{\policy}{{\pi}}
\newcommand{\discount}{{\gamma}}
\newcommand{\States}{{\cal S}}
\newcommand{\Actions}{{\cal A}}
\newcommand{\A}{{\cal A}}
\newcommand \hD {{\hat{\D}}}
\newcommand \hQ {{\hat{Q}}}
\newcommand \tQ {{\tilde{Q}}}
\newcommand{\bX} {\bar{X}}
\newcommand{\hX} {\hat{X}}
\newcommand{\hXn} {\hX_n}
\newcommand{\bY} {\bar{Y}}
\newcommand{\hY} {\hat{Y}}
\newcommand{\hYn} {\hY_n}
\newcommand{\bD} {\bar{\D}}
\newcommand{\hDn} {\hD_n}
\newcommand \aspls {a_{s,\policy}^*}
\newcommand \haspls {\hat{a}_{s,\policy}^*}
\newcommand \Dpl {\D^{\policy}}
\newcommand \Vpl {V^{\policy}}
\newcommand \Vpli {V^{\policy_i}}
\newcommand \Qpl {Q^{\policy}}
\newcommand \Qpli {Q^{\policy_i}}
\newcommand \hDpl {\hD^{\policy}}
\newcommand \hQpl {\hQ^{\policy}}
\newcommand \hQplTK {\hQ^{\policy,T}_K}
\newcommand \tQpl {\tQ^{\policy}}
\newcommand \tQplTi {\tQ^{\policy,T}_{(i)}}
\newcommand \cnt {{\sc{Count}}}
\newcommand \ucba {{\sc{SUCB1}}}
\newcommand \ucbb {{\sc SUCB2}}
\newcommand \scel {{\sc{SuccE}}}
\newcommand \rspi {{\sc{RSPI}}}
\newcommand \rcpi {{\sc{RCPI}}}
\newcommand \maxs {{n_\mathtt{max}}}
\newcommand \maxr {{m_\mathtt{max}}}
\newcommand{\mycite}{\cite}
\begin{document}
%\mainmatter              % start of the contributions
%
\title{Rollout Sampling Approximate Policy Iteration%
%\thanks{This project was partially supported by the ICIS-IAS project and the European Marie-Curie International Reintegration Grant MCIRG-CT-2006-044980 awarded to Michail G.\ Lagoudakis.}
}
%
%\titlerunning{Rollout Sampling Approximate Policy Iteration}  % abbreviated title (for running head)

% \author{Christos Dimitrakakis\inst{1} \and Michail G. Lagoudakis\inst{2}}
%\authorrunning{Dimitrakakis and Lagoudakis}   % abbreviated author list (for running head)
% %
% %\tocauthor{Christos Dimitrakakis, Michail G.\ Lagoudakis}
% %
% \institute{
% Informatics Institute \\
% University of Amsterdam\\
% Amsterdam, The Netherlands\\
% \email{dimitrak@science.uva.nl}
% \and
% Department of Electronic and Computer Engineering\\
% Technical University of Crete\\
% Chania 73100, Crete, Greece\\
% \email{lagoudakis@intelligence.tuc.gr}
% }

\author{Christos Dimitrakakis \and Michail G. Lagoudakis}
%\authorrunning{Dimitrakakis and Lagoudakis}   % abbreviated author list (for running head)
%
%\tocauthor{Christos Dimitrakakis, Michail G.\ Lagoudakis}
%
%\institute{C. Dimitrakakis \at
%Informatics Institute, University of Amsterdam,\\
%Kruislaan 403, 1098SJ Amsterdam, The Netherlands\\
%Tel.: +31-20525-7517\\
%\email{dimitrak@science.uva.nl}
%\and
%M. G. Lagoudakis \at
%Department of Electronic and Computer Engineering,
%Technical University of Crete,\\
%Chania 73100, Crete, Greece\\
%\email{lagoudakis@intelligence.tuc.gr}
%}

\maketitle              % typeset the title of the contribution

\begin{abstract}
  Several researchers have recently investigated the connection
  between reinforcement learning and classification. We are motivated
  by proposals of approximate policy iteration schemes without value
  functions, which focus on policy representation using classifiers and
  address policy learning as a supervised learning problem. This paper
  proposes variants of an improved policy iteration scheme which
  addresses the core sampling problem in evaluating a policy through
  simulation as a multi-armed bandit machine. The resulting algorithm
  offers comparable performance to the previous algorithm achieved,
  however, with significantly less computational effort. An order of
  magnitude improvement is demonstrated experimentally in two standard
  reinforcement learning domains: inverted pendulum and mountain-car.
%\keywords{reinforcement learning \and approximate policy iteration
%\and rollouts \and bandit problems \and classification \and sample complexity
%}
\end{abstract}

\section{Introduction}
\label{sec:introduction}

Supervised and reinforcement learning are two well-known learning
paradigms, which have been researched mostly independently. Recent
studies have investigated the use of supervised learning methods for
reinforcement learning, either for value
function~\mycite{lagoudakis2003lsp,riedmiller2005nfq} or policy
representation~\mycite{lagoudakisICML03,fern2004api,langfordICML05}. Initial
results have shown that policies can be approximately represented
using either multi-class classifiers
or combinations of binary classifiers~\mycite{rexakis+lagoudakis:ewrl2008}
and, therefore, it is possible to
incorporate classification algorithms within the inner loops of
several reinforcement learning
algorithms~\mycite{lagoudakisICML03,fern2004api}. This viewpoint allows
the quantification of the performance of reinforcement learning
algorithms in terms of the performance of classification
algorithms~\mycite{langfordICML05}. While a variety of promising
combinations become possible through this synergy, heretofore there
have been limited practical and widely-applicable algorithms.

Our work builds on the work of Lagoudakis and
Parr~\mycite{lagoudakisICML03} who suggested an approximate policy
iteration algorithm for learning a good policy represented as a
classifier, avoiding representations of any kind of value function. At
each iteration, a new policy/classifier is produced using training
data obtained through extensive simulation (rollouts) of the previous
policy on a generative model of the process. These rollouts aim at
identifying better action choices over a subset of states in order to
form a set of data for training the classifier representing the
improved policy. A similar algorithm was proposed by Fern et
al.~\mycite{fern2004api} at around the same time. The key differences
between the two algorithms are related to the types of learning
problems they are suitable for, the choice of the underlying
classifier type, and the exact form of classifier training.
Nevertheless, the main ideas of producing training data using rollouts
and iterating over policies remain the same.  Even though both of
these studies look carefully into the distribution of training states
over the state space, their major limitation remains the large amount
of sampling employed at each training state. It is
hinted~\mycite{lagoudakisPhD03}, however, that great improvement could
be achieved with sophisticated management of rollout sampling.

Our paper suggests managing the rollout sampling procedure within the
above algorithm with the goal of obtaining comparable training sets
(and therefore policies of similar quality), but with significantly
less effort in terms of number of rollouts and computation effort.  This
is done by viewing the setting as akin to a bandit problem over the
rollout states (states sampled using rollouts).  Well-known algorithms
for bandit problems, such as Upper Confidence
Bounds~\mycite{auerMLJ02} and Successive
Elimination~\mycite{evendarJMLR06}, allow optimal allocation of
resources (rollouts) to trials (states). Our contribution is two-fold:
(a) we suitably adapt bandit techniques for rollout management, and
(b) we suggest an improved statistical test for identifying early with
high confidence states with dominating actions.  In return, we obtain
up to an order of magnitude improvement over the original algorithm in
terms of the effort needed to collect the training data for each
classifier.  This makes the resulting algorithm attractive to
practitioners who need to address large real-world problems.

The remainder of the paper is organized as follows.
Section~\ref{sec:preliminaries} provides the necessary background and
Section~\ref{sec:rcpi} reviews the original algorithm we are based on.
Subsequently, our approach is presented in detail in
Section~\ref{sec:rspi}. Finally, Section~\ref{sec:experiments}
includes experimental results obtained from well-known learning
domains.

\section{Preliminaries}
\label{sec:preliminaries}

A {\em Markov Decision Process} (MDP) is a 6-tuple $(\States,\A, P, R,
\discount, D)$, where $\States$ is the state space of the process,
$\A$ is a finite set of actions, $P$ is a Markovian transition model
($P(s,a,s')$ denotes the probability of a transition to state $s'$
when taking action $a$ in state $s$), $R$ is a reward function
($R(s,a)$ is the expected reward for taking action $a$ in state $s$),
$\discount\in (0,1]$ is the discount factor for future rewards, and
 $D$ is the initial state distribution. A {\em deterministic policy}
  $\policy$ for an MDP is a mapping $\policy: \States \mapsto \A$ from
  states to actions; $\policy(s)$ denotes the action choice at state
  $s$. The value $\Vpl(s)$ of a state $s$ under a policy
  $\policy$ is the expected, total, discounted reward when the process
  begins in state $s$ and all decisions at all steps are made
  according to policy $\policy$:
\[
\Vpl(s) = E \left[ \sum_{t=0}^{\infty} \discount^t R\big(s_t,\policy(s_t)\big) \; \Big| \; s_0 = s, s_t \sim P \right].
\]
The goal of the decision maker is to find an optimal policy
$\policy^*$ that maximizes the expected, total, discounted reward from
the initial state distribution $D$:
\[
\policy^* = \argmax_\policy  E_{s\sim D} \left[ \Vpl(s) \right].
\]
It is well-known that for every MDP, there exists at least one optimal
deterministic policy.

{\em Policy iteration} (PI)~\mycite{howard60} is an efficient method for deriving an optimal
policy. It generates a sequence $\policy_1$, $\policy_2$, ...,
$\policy_k$ of gradually improving policies and terminates when
there is no change in the policy ($\policy_k = \policy_{k-1}$);
$\policy_k$ is an optimal policy. Improvement is achieved by computing
$\Vpli$ analytically (solving the linear Bellman equations) and
the action values:
\[
\Qpli(s,a) = R(s,a) + \discount \sum_{s' \in \States} P(s,a,s') \Vpli(s') \;,
\]
and then determining the improved policy as:
\[
\policy_{i+1}(s) = \arg\max_{a \in \Actions} \Qpli(s,a) \;,
\]
Policy iteration typically terminates in a small number of steps.
However, it relies on knowledge of the full MDP model, exact
computation and representation of the value function of each policy,
and exact representation of each policy. {\em Approximate policy
iteration} (API) is a family of methods, which have been suggested to
address the ``curse of dimensionality'', that is, the huge growth in
complexity as the problem grows. In API, value functions and policies
are represented approximately in some compact form, but the iterative
improvement process remains the same.  Apparently, the guarantees for
monotonic improvement, optimality, and convergence are
compromised. API may never converge, however in practice it reaches
good policies in only a few iterations.

In reinforcement learning, the learner interacts with the process and typically
observes the state and the immediate reward at every step, however $P$ and
$R$ are not accessible. The goal is to gradually learn an optimal
policy through interaction with the process. At each step of
interaction, the learner observes the current state $s$, chooses an
action $a$, and observes the resulting next state $s'$ and the reward
received $r$. In many cases, it is further assumed that the learner
has the ability to reset the process in any arbitrary state $s$. This
amounts to having access to a generative model of the process (a
simulator) from where the learner can draw arbitrarily many times a
next state $s'$ and a reward $r$ for performing any given action $a$
in any given state $s$. Several algorithms have been
proposed for learning good or even optimal policies~\mycite{Sutton+Barto:1998}.

\section{Rollout Classification Policy Iteration}
\label{sec:rcpi}

The {\em Rollout Classification Policy Iteration} (RCPI)
algorithm~\mycite{lagoudakisICML03,lagoudakisPhD03} belongs to the API
family and focuses on direct policy learning and representation
bypassing the need for an explicit value function. The key idea in
RCPI is to cast the problem of policy learning as a classification
problem. Thinking of states as examples and of actions as class
labels, any deterministic policy can be thought of as a classifier
that maps states to actions. Therefore, policies in RCPI are
represented (approximately) as generic multi-class classifiers that
assign states (examples) to actions (classes). The problem of finding
a good policy is equivalent to the problem of finding a classifier
that maps states to ``good'' actions, where the goodness of an action
is measured in terms of its contribution to the long term goal of the
agent. The state-action value function $\Qpl$ in the context of a
fixed policy $\policy$ provides such a measure; the action that
maximizes $\Qpl$ in state $s$ is a ``good'' action, whereas any action
with smaller value of $\Qpl$ is a ``bad'' one. A training set could be
easily formed if the $\Qpl$ values for all actions were available for
a subset of states.

The Monte-Carlo estimation technique of {\em rollouts} provides a way
of accurately estimating $\Qpl$ at any given state-action pair $(s,a)$
without requiring an explicit representation of the value function. A
rollout for $(s,a)$ amounts to simulating a trajectory of the process
beginning from state $s$, choosing action $a$ for the first step, and
choosing actions according to the policy $\policy$ thereafter up to a
certain horizon $T$. The observed total discounted reward is averaged
over a number of rollouts to yield an estimate. Thus, using a
sufficient amount of rollouts it is possible to form a valid training
set for the improved policy over any base policy.  More specifically,
if we denote the sequence of collected rewards during the $i$-th
simulated trajectory as $r_{t}^{(i)}$, $t=0,1,2,\ldots,T-1$, then the
rollout estimate $\hQplTK(s,a)$ of the true state-action value
function $\Qpl(s,a)$ is the observed total discounted reward, averaged
over all $K$ trajectories:
\begin{align*}
  \label{eq:rollout-estimate}
  \hQplTK(s,a) &\defn \frac{1}{K}\sum_{i=1}^{K} \tQplTi(s,a) \;,
  &
  \tQplTi(s,a) &\defn \sum_{t=0}^T
  \discount^{t} r_{t}^{(i)} \;.
\end{align*}
%Similarly, we also define
%\[
%\QplT(s,a) \defn \E\left[ \sum_{t=0}^{T-1} \discount^{t} R(s_t,a_t) \; \Big| \; s_0=s, a_0=a, s_t \sim P, a_t = \policy(s_t)  \right]
%\]
%to be the actual state-action value function up to horizon $T$.
With a
sufficient amount of rollouts and a large $T$, we can create an improved
policy $\pi'$ from $\pi$ at any state $s$, without requiring a model
of the MDP.

\begin{algorithm}[hbt]
   \caption{Rollout Classification Policy Iteration}
   \label{alg:rcpi}
\begin{algorithmic}
   \STATE {\bfseries Input:} rollout states $S_R$,\! initial policy $\policy_0$,\! trajectories $K$,\! horizon $T$,\! discount factor $\discount$
   \STATE
   \STATE $\policy' = \policy_0$ (default: uniformly random)
   \REPEAT
   \STATE $\policy = \policy'$
   \STATE TrainingSet $=\varnothing$
   \FOR{(each $s \in S_R$)}
   \FOR{(each $a \in \Actions$)}
   \STATE estimate $\Qpl(s,a)$ using $K$ rollouts of length $T$
   \ENDFOR
   \IF{(a dominating action $a^*$ exists in $s$)}
   \STATE TrainingSet = TrainingSet $\cup\ \{(s,a^*)^+\}$
   \STATE TrainingSet = TrainingSet $\cup\ \{(s,a)^-\}$, $\forall \; a \neq a^*$
   \ENDIF
   \ENDFOR
   \STATE $\policy'$ = {\sc TrainClassifier}(TrainingSet)
   \UNTIL{($\policy \approx \policy'$)}
   \STATE{\textbf{return} $\policy$}
\end{algorithmic}
\end{algorithm}

Algorithm~\ref{alg:rcpi} describes RCPI step-by-step. Beginning with
any initial policy $\policy_0$, a training set over a subset of states
$S_R$ is formed by querying the rollout procedure for the state-action
values of all actions in each state $s \in S_R$ with the purpose of
identifying the ``best'' action and the ``bad'' actions in $s$.  An
action is said to be {\em dominating} if its empirical value is
significantly greater than those of all other actions.  In RCPI this
is measured in a statistical sense using a pairwise $t$-test, to
factor out estimation errors.  Notice that the training set contains
both positive and negative examples for each state where a clear
domination is found. A new classifier is trained using these examples
to yield an approximate representation of the improved policy over the
previous one.  This cycle is then repeated until a termination
condition is met. Given the approximate nature of this policy
iteration, the termination condition cannot rely on convergence to a
single optimal policy. Rather, it terminates when the performance of
the new policy (measured via simulation) does not exceed that of the
previous policy.

The RCPI algorithm has yielded promising results in several learning
domains, however, as stated also by Lagoudakis~\mycite{lagoudakisPhD03},
it is sensitive to the distribution of states in $S_R$ over the state
space. For this reason it is suggested to draw states from the
$\gamma$-discounted future state distribution of the improved
policy. This tricky-to-sample distribution, also suggested by Fern et
al.~\mycite{fern2004api}, yields better results and resolves any
potential mismatch between the training and testing distributions of
the classifier. However, the main drawback is still the excessive
computational cost due to the need for lengthy and repeated rollouts
to reach a good level of accuracy. In our experiments with RCPI, it
has been observed that most of the effort is wasted on states where
action value differences are either non-existent or so fine that they
require one to use a prohibitive number of rollouts to identify
them. Significant effort is also wasted on sampling states where a
dominating action could be easily identified without exhausting all
rollouts allocated to it. In this paper, we propose rollout sampling
methods to remove this performance bottle-neck.

\section{Rollout Sampling Policy Iteration}
\label{sec:rspi}
The excessive sampling cost mentioned above can be reduced by careful
management of resources. The scheme suggested by RCPI, also used by
Fern et al.~\mycite{fern2004api}, is somewhat na\"{i}ve; the same number of
$K|\Actions|$ rollouts is allocated to each state in the subset $S_R$
and all $K$ rollouts dedicated to a single action are exhausted before
moving on to the next action. Intuitively, if the desired outcome
(domination of a single action) in some state can be confidently
determined early, there is no need to exhaust all $K|\Actions|$
rollouts available in that state; the training data could be stored
and the state could be removed from the pool without further
examination. Similarly, if we can confidently determine that all
actions are indifferent in some state, we can simply reject it without
wasting any more rollouts; such rejected states could be replaced by
fresh ones which might yield meaningful results. These ideas lead to
the following question: can we examine all states in the subset $S_R$
collectively in some interleaved manner by choosing each time a single
state to focus on, allocating rollouts only as needed?

\begin{algorithm}[tb]
   \caption{{\sc SampleState}}
   \label{alg:sample}
\begin{algorithmic}
   \STATE {\bfseries Input:} state $s$, policy $\policy$, horizon $T$, discount factor $\discount$
   \STATE
   \FOR{(each $a \in \Actions$)}
   \STATE $(s', r)$ = {\sc Simulate}$(s,a)$
   \STATE $\tQpl(s,a) = r$
   \STATE $x = s'$
   \FOR{$t=1$ {\bf to} $T-1$}
   \STATE $(x', r)$ = {\sc Simulate}$(x,\policy(x))$
   \STATE $\tQpl(s,a) = \tQpl(s,a) + \discount^t r$
   \STATE $x = x'$
   \ENDFOR
   \ENDFOR
   \STATE{\textbf{return} $\tQpl(s,\cdot)$}
\end{algorithmic}
\end{algorithm}

A similar resource allocation setting in the context of reinforcement
learning are bandit problems.  Therein, the learner is faced with a
choice between $n$ bandits, each one having an unknown reward
function.  The task is to allocate plays such as to discover the
bandit with the highest expected reward without wasting too many
resources in either cumulative reward, or in number of plays
required\footnote{The precise definition of the task depends on the
  specific problem formulation and is beyond the scope of this
  article.}.  Taking inspiration from such problems, we view the set of
rollout states as a multi-armed bandit, where each state corresponds
to a single lever/arm.  Pulling a lever corresponds to sampling the
corresponding state once. By {\em sampling a state} we mean that we
perform a single rollout for each action in that state as shown in
Algorithm~\ref{alg:sample}. This is the minimum amount of information
we can request from a single state\footnote{It is possible to also
  manage sampling of the actions within a state, but our preliminary
  experiments showed that managing action sampling alone saved little effort compared to
  managing state sampling.  We are currently working on managing
  sampling at both levels.}.  Thus, the problem is transformed to a
{\em variant} of the classic multi-armed bandit problem.  Several
methods have been proposed for various versions of this problem, which
could potentially be used in this context. In this paper, we focus on
three of them: simple counting, upper confidence
bounds~\mycite{auerMLJ02}, and successive
elimination~\mycite{evendarJMLR06}.

Our goal at this point is to collect good training data for the
classifier with as little computational effort as possible.  We can
quantify the notion of goodness for the training data in terms of
three guarantees: (a) that states will be sampled only as needed to
produce training data without wasting rollouts, (b) that with high
probability, the discovered action labels in the training data
indicate dominating actions, and (c) that the training data cover the
state space sufficiently to produce a good representation of the
entire policy.  We look at each one of these objectives in turn.

\subsection{Rollout Management}

As mentioned previously, our algorithm maintains a pool of states
$S_R$ from which sampling is performed.
In this paper, states $s \in S_R$ are drawn
from a uniformly random distribution to cover the state space evenly,
however other, more sophisticated, distributions may also be used.
In order to allocate rollouts wisely, we
need to decide which state to sample from at every step.  We also need
to determine criteria to decide when to stop sampling from a state, when to
add new states to the pool, and finally when to stop sampling
completely.

The general form of the state selection rule for all algorithms is:
\[
s = \argmax_{s' \in S_R} U(s') \;,
\]
where $U(s)$ represents the utility associated with sampling state $s$.
The presented algorithms use one of the following variants:
%% NOTE: Why not try also just  $U(s) \defn {\hDpl}(s)$ ?
\begin{enumerate}
  \item \cnt, \scel :  $U(s) \defn -c(s)$
  \item \ucba:  $U(s) \defn {\hDpl}(s) + \sqrt{1 /(1+c(s))}$
  \item \ucbb:  $U(s) \defn {\hDpl}(s) + \sqrt{\ln m /(1+c(s))}$
%%  \item \scel:  $U(s) \defn -c(s)$
\end{enumerate}
where $c(s)$ is a counter recording the number of times state $s$ has
been sampled, $m$ is the total number of state samples, and
$\hDpl(s)$ is the empirical counterpart of the marginal difference $\Dpl(s)$ in
$\Qpl$ values in state $s$ defined as
\[
\Dpl(s) \defn \Qpl(s,\aspls) - \max_{a \neq \aspls}\Qpl(s,a) \;,
\]
where $\aspls$ is the action%
\footnote{The case of multiple
equivalent maximizing actions can be easily handled by generalising to sets of
actions in the manner of Fern et al.~\mycite{fern2006api}.
Here we discuss only the single best action case to simplify the exposition.}
that maximizes $\Qpl$ in state $s$:
\[
\aspls = \argmax_{a \in \Actions} \Qpl(s,a) \;.
\]
Similarly, the empirical difference $\hDpl(s)$ is defined in terms of
the empirical $Q$ values:
\[
\hDpl(s) \defn \hQplTK(s,\haspls) - \max_{a \neq \haspls}\hQplTK(s,a) \;,
\]
where $\haspls$ is the action that maximizes $\hQplTK$ in state $s$:
\[
\haspls = \argmax_{a \in \Actions} \hQplTK(s,a) \;,
\]
with $K=c(s)$ and some fixed $T$ independent of $s$.

The {\cnt} variant is a simple counting criterion, where the state
that has been sampled least has higher priority for being sampled
next.  Since we stop sampling a state as soon as we have a
sufficiently good estimate, this criterion should result in less
sampling compared to {\rcpi}, which continues sampling even after an
estimate is deemed sufficiently good.

The {\scel} variant uses the same criterion as {\cnt} to sample states,
but features an additional mechanism for removing apparently hopeless
states from $S_R$.  This is based on the Successive Elimination algorithm
(Algorithm 3 in~\mycite{evendarJMLR06}).  We expect this criterion to
be useful in problems with many states where all actions are
indifferent.  However, it might also result in the continual rejection
of small-difference states until a high-difference state is sampled,
effectively limiting the amount of state space covered by the final
gathered examples.

The {\ucba} variant is based on the UCB algorithm~\mycite{auerMLJ02} and
gives higher priority to states with a high empirical difference and
high uncertainty as to what the difference is.  Thus, states can take
priority for two reasons.  Firstly, because they have been sampled
less, and secondly because they are more likely to result in acceptance quickly.

The {\ucbb} variant is based on the original UCB1 algorithm by
Auer~\mycite{auerMLJ02}, in that it uses a shrinking error bound for
calculating the upper confidence interval.  Since in our setting we
stop sampling states where the difference in actions is sufficiently
large, this will be similar to simple counting as the process
continues.  However, intuitively it will focus on those states that
are most likely to result in a positive identification of a dominating
action quickly towards the end.

In all cases, new states are added to the pool as soon as a state has
been removed, so $S_R$ has a constant size.  The criterion for
selecting examples is described in the following section.

\subsection{Statistical Significance}

Sampling of states proceeds according to one of these rules at each
step. Once a state is identified as ``good'', it is removed from the state
pool and is added to the training data to prevent further ``wasted''
sampling on that state\footnote{Of course, if we wanted to
  continuously shrink the probability of error we could continue
  sampling from those states.}. In order to terminate sampling and
accept a state as good, we rely on the following well-known
lemma.
\begin{lemma}[Hoeffding inequality]
  \label{lem:hoeffding}
  Let $X$ be a random variable in $[b_1,b_2]$ with $\bar{X} \defn \E[X]$,
  observed values $x_1, x_2, \ldots, x_n$ of $X$, and $\hat{X}_n \defn
  \frac{1}{n}\sum_{i=1}^n x_i$.  Then $\Pr(\hXn \geq \bX + \epsilon) =
  \Pr(\hXn \leq \bX - \epsilon) \leq \exp \left( -2 n \epsilon^2 /
    (b_1-b_2)^2 \right)$.
\end{lemma}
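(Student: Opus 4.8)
The plan is to prove the upper-tail bound $\Pr(\hXn \geq \bX + \epsilon) \leq \exp(-2n\epsilon^2/(b_2-b_1)^2)$ by the exponential (Chernoff) moment method; the lower-tail bound is obtained by applying the same argument to $-X$, and since the two bounds coincide the displayed statement follows. I assume, as in the rollout setting, that the observed values $x_1,\dots,x_n$ are realizations of independent copies $X_1,\dots,X_n$ of $X$.

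First I would fix a parameter $s>0$ and apply Markov's inequality to the exponentiated centered sum:
\[
\Pr\!\left(\hXn - \bX \geq \epsilon\right) = \Pr\!\left(e^{s\sum_{i=1}^n (X_i-\bX)} \geq e^{sn\epsilon}\right) \leq e^{-sn\epsilon}\,\prod_{i=1}^n \E\!\left[e^{s(X_i-\bX)}\right],
\]
where the last step uses independence to factor the joint moment generating function. The key ingredient is then \emph{Hoeffding's lemma}: if $Y$ takes values in an interval $[a,b]$ and $\E[Y]=0$, then $\E[e^{sY}] \leq \exp(s^2(b-a)^2/8)$. Applying it with $Y = X_i - \bX$, which lies in the interval $[\,b_1-\bX,\ b_2-\bX\,]$ of width $b_2-b_1$, gives $\Pr(\hXn - \bX \geq \epsilon) \leq \exp\!\big(-sn\epsilon + ns^2(b_2-b_1)^2/8\big)$.

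To complete the argument I would minimize the exponent over $s>0$: the choice $s = 4\epsilon/(b_2-b_1)^2$ makes the exponent equal to $-2n\epsilon^2/(b_2-b_1)^2$, which is exactly the claimed bound. Replacing $X$ by $-X$ throughout yields the identical bound on $\Pr(\hXn \leq \bX - \epsilon)$.

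The step requiring genuine work — and hence the main obstacle — is Hoeffding's lemma. The cleanest route is to set $\phi(s) \defn \log \E[e^{sY}]$ and observe that $\phi(0)=0$, $\phi'(0)=\E[Y]=0$, while $\phi''(s)$ equals the variance of $Y$ under the exponentially tilted law $d\mu_s \propto e^{sy}\,d\mu$; since that tilted law is still supported on $[a,b]$, its variance is at most $(b-a)^2/4$. A second-order Taylor expansion of $\phi$ about $0$ then gives $\phi(s) \leq s^2(b-a)^2/8$. (Alternatively, one may bound $e^{sy}$ on $[a,b]$ by the chord $\tfrac{b-y}{b-a}e^{sa} + \tfrac{y-a}{b-a}e^{sb}$ by convexity, take expectations, and estimate the resulting one-variable function of $s$.) Everything else in the proof is routine bookkeeping.
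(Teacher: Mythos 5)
Your proof is correct, and it is the standard Chernoff--Hoeffding argument: Markov's inequality on the exponentiated centered sum, Hoeffding's lemma $\E[e^{sY}]\leq e^{s^2(b-a)^2/8}$ for a centered bounded variable, and optimization at $s=4\epsilon/(b_2-b_1)^2$; your sketch of Hoeffding's lemma via the variance of the tilted law is also sound. There is nothing in the paper to compare against: the result is invoked there as a ``well-known lemma'' with no proof given. Two small points you handled well and that are worth keeping explicit: the statement as printed asserts \emph{equality} of the two tail probabilities, which is not literally true in general --- the correct reading, which you adopt, is that both tails satisfy the same bound (the lower tail by applying the argument to $-X$); and the lemma as stated never says the observations are independent, an assumption your proof genuinely needs and which you rightly make explicit.
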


Consider two random variables $X, Y$, their true means $\bX, \bY$, and
their empirical means $\hXn, \hYn$, as well as a random variable $\D
\defn X - Y$ representing their difference, its true mean $\bD \defn
\bX - \bY$, and its empirical mean $\hDn \defn \hXn - \hYn$.  If $\D
\in [b_1,b_2]$, it follows from Lemma~\ref{lem:hoeffding} that
\begin{equation}
  \label{eq:two_variable_hoeffding}
  \Pr(\hDn \geq \bD + \epsilon) \leq \exp\left(- \frac{2n\epsilon^2}{(b_2-b_1)^2 } \right).
\end{equation}
% For any state $s$, we set $X=\Qpl(s,\aspls)$, $Y= \max_{a \neq
%   \aspls} \Qpl(s,a)$, $\hXn=\hQpl(s,\haspls)$, and $\hYn= \max_{a
%   \neq \haspls} \hQpl(s,a)$. Then, $\D = \bD = \Dpl(s)$ and $\hDn =
% \hDpl(s)$. If $\Dpl(s) \in [a,b]$, then
% \eqref{eq:two_variable_hoeffding} becomes
%% NOTE: This won't work: E[Y] is not equal to E[\hYn]!
We now consider applying this for determining the best action at any
state $s$ where we have taken $c(s)$ samples from every action.  As
previously, let $\haspls$ be the empirically optimal action in that
state.  If $\Dpl(s) \in [b_1,b_2]$, then for any $a' \neq \haspls$, we
can set $\bX = \Qpl(s,\haspls)$, $\bY = \Qpl(s,a')$, and
correspondingly $\hXn, \hYn$ to obtain:
\begin{equation}
  \label{eq:Dhoeffding}
  \Pr\left(
    \hQpl(s,\haspls) - \hQpl(s,a') \geq \Qpl(s,\haspls) - \Qpl(s,a') + \epsilon
  \right)
  \leq
  \exp
  \left(
    - \frac{2c(s)\epsilon^2}{(b_2-b_1)^2 }
  \right) \;.
\end{equation}

\begin{corollary}
For any state $s$ where the following condition holds
\begin{equation}
  \label{eq:stopping_condition}
  \hDpl(s)
  \geq
  \sqrt{\frac{(b_2-b_1)^2}{2c(s)} \ln \left(\cfrac{|\Actions| - 1}{\perr}\right)},
\end{equation}
the probability of incorrectly identifying $\aspls$ is bounded by
$\perr$.
\end{corollary}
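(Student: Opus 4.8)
The plan is to show that the failure event — accepting state $s$ under condition~\eqref{eq:stopping_condition} while the empirically best action $\haspls$ differs from the true best action $\aspls$ — forces the \emph{empirical} $Q$-gap between $\haspls$ and $\aspls$ to overshoot the corresponding \emph{true} gap by at least the threshold appearing in~\eqref{eq:stopping_condition}, and then to control the probability of such a one-sided deviation by a union bound over the at most $|\Actions|-1$ suboptimal actions via inequality~\eqref{eq:two_variable_hoeffding}. Abbreviate the threshold as $\epsilon_0 \defn \sqrt{\tfrac{(b_2-b_1)^2}{2c(s)}\ln\tfrac{|\Actions|-1}{\perr}}$, chosen precisely so that the Hoeffding tail in~\eqref{eq:two_variable_hoeffding}, evaluated with $\epsilon=\epsilon_0$ and $n=c(s)$, equals exactly $\perr/(|\Actions|-1)$.

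First I would work on a fixed realisation that both satisfies~\eqref{eq:stopping_condition} and has $\haspls\neq\aspls$, and extract a deterministic inequality. On the one hand, since $\aspls\neq\haspls$, the action $\aspls$ is among those competing with $\haspls$ in the definition of $\hDpl(s)$, so $\hQpl(s,\haspls)-\hQpl(s,\aspls)\ge\hDpl(s)\ge\epsilon_0$. On the other hand, $\aspls$ maximises $\Qpl(s,\cdot)$, so $\Qpl(s,\haspls)-\Qpl(s,\aspls)\le 0$ (indeed $\le -\Dpl(s)$). Adding these, on the failure event $\hQpl(s,\haspls)-\hQpl(s,\aspls)\ge \Qpl(s,\haspls)-\Qpl(s,\aspls)+\epsilon_0$.

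The next — and key — step is to discharge the randomness of the argmax $\haspls$. Because $\Qpl(s,a)-\Qpl(s,\aspls)\le 0$ for every action $a$, the event of the previous paragraph is contained in $\bigcup_{a\neq\aspls}\bigl\{\hQpl(s,a)-\hQpl(s,\aspls)\ge \Qpl(s,a)-\Qpl(s,\aspls)+\epsilon_0\bigr\}$, the $a=\haspls$ member of the union already covering it. Applying~\eqref{eq:two_variable_hoeffding} to each member, with $X$ and $Y$ the single-rollout returns of actions $a$ and $\aspls$ respectively (so their empirical means are $\hQpl(s,a)$ and $\hQpl(s,\aspls)$), bounds each term by $\perr/(|\Actions|-1)$; summing over the $|\Actions|-1$ choices of $a\neq\aspls$ yields the claimed bound $\perr$.

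The single-state statement itself is thus a short union bound and poses no real obstacle; the step that needs genuine care — and which the corollary as phrased quietly sidesteps — is that the bound above is for a \emph{fixed} count $c(s)$, whereas in the algorithm~\eqref{eq:stopping_condition} is re-tested after each new rollout, making $c(s)$ a data-dependent stopping time. Making the guarantee valid over an entire run would require an additional union bound over the finitely many values $c(s)$ may reach (or a peeling argument), at the cost of a slightly larger logarithmic factor. A minor but easy-to-overlook point is that $[b_1,b_2]$ must be taken as the range of the \emph{difference} of two rollout returns, as used in deriving~\eqref{eq:two_variable_hoeffding}, not the range of the deterministic quantity $\Dpl(s)$.
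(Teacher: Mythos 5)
Your proof is correct and follows essentially the same route as the paper's: Hoeffding applied to differences of empirical $Q$-values plus a union bound over the $|\Actions|-1$ suboptimal actions. In fact your version is slightly more careful on two points the paper glosses over — you take the union over fixed actions $a\neq\aspls$ each compared against the fixed $\aspls$ rather than applying Hoeffding directly with the random argmax $\haspls$ in the role of $\bX$, and you rightly flag that the data-dependent stopping count $c(s)$ would need an additional union bound over its possible values (and that $[b_1,b_2]$ must bound the range of the difference of returns) for the guarantee to hold as the condition is re-tested during a run.
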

\begin{proof}
  We can set $\epsilon$ equal to the right hand side of
  \eqref{eq:stopping_condition}, to obtain:
\begin{multline}
  \Pr\left(
    \hQpl(s,\haspls) - \hQpl(s,a') \geq \Qpl(s,\haspls) - \Qpl(s,a') +
 \sqrt{\frac{(b_2-b_1)^2}{2c(s)} \ln \left(\cfrac{|\Actions| - 1}{\perr}\right)}
  \right)\\
  \leq \perr / (|\Actions| - 1),
\end{multline}
Incorrectly identifying $\aspls$ implies that there exists some $a'$
such that $\Qpl(s,\haspls) - \Qpl(s,a') \leq 0$, while $
\hQpl(s,\haspls) - \hQpl(s,a') > 0$.  However, due to our stopping
condition, 
\[
\hQpl(s,\haspls) - \hQpl(s,a') \geq \hDpl(s) \geq
\sqrt{\frac{(b_2-b_1)^2}{2c(s)} \ln [(|\Actions| - 1)/\perr]},
\] so in
order to make a mistake concerning the ordering of the two actions,
the estimation error must be larger than the right side of
\eqref{eq:stopping_condition}.  Thus, this probability is also bounded
by $\perr / (|\Actions| -1)$. Given that the number of actions $a'
\neq \haspls$ is $|\Actions| - 1$, an application of the union bound
implies that the total probability of making a mistake in state $s$
must be bounded by $\perr$. \qed
\end{proof}

In summary, every time $s$ is sampled, both $c(s)$ and $\hDpl(s)$
change.  Whenever the stopping condition in
\eqref{eq:stopping_condition} is satisfied, state $s$ can be safely
removed from $S_R$; with high probability $(1-\perr)$ the current
empirical difference value will not change sign with further sampling
and confidently the resulting action label is indeed a dominating
action\footnote{The original {\rcpi} algorithm employed a pairwise
  $t$-test.  This choice is flawed, since it assumes a normal
  distribution of errors, whereas the Hoeffding bound simply assumes
  that the variables are bounded.}.  Finally note that in practice, we
might not be able to obtain full trajectories -- in this case, the
estimates and true value functions should be replaced with their
$T$-horizon versions.

\subsection{State Space Coverage}

For each policy improvement step, the algorithm terminates when we
have succeeded in collecting $\maxs$ examples, or when we have
performed $\maxr$ rollouts.  Initially, $|S_R| = \maxs$.  In order to
make sure that training data are not restricted to a static subset
$S_R$, every time a state is characterized good and removed from
$S_R$, we add a new state to $S_R$ drawn from some fixed distribution
$\CD_R$ that serves as a source of rollout states. The simplest choice
for $\CD_R$ would be a uniform distribution over the state space,
however other choices are possible, especially if domain knowledge
about the structure of good policies is known. A sophisticated choice
of $\CD_R$ is a difficult problem itself and we do not investigate it
here; it has been conjectured that a good choice is the
$\gamma$-discounted future state distribution of the improved policy
being learned~\mycite{lagoudakisICML03,fern2004api}.

We have also toyed with the idea of rejecting states which seem
hopeless to produce training data, replacing them with fresh states
sampled from some distribution $\CD_R$. The {\scel} rule incorporates
such a rejection criterion by default~\mycite{evendarJMLR06}. For the
other variants, if rejection is adopted, we reject all states $s \in
S_R$ with $U(s) < \sqrt{\ln m}$, which suits {\ucbb} particularly
well.

The complete algorithm, called {\em Rollout Sampling Policy Iteration}
(\rspi), is described in detail in Algorithm~\ref{alg:rspi}.  The call
to {\sc SelectState} refers to one of the four selection rules
described above.  Note that a call to {\scel} might also eliminate
some states from $S_R$ replacing them with fresh ones drawn from
$\CD_R$.

\begin{algorithm}
   \caption{Rollout Sampling Policy Iteration}
   \label{alg:rspi}
\begin{algorithmic}
   \begin{small}
   \STATE {\bfseries Input:} distribution $\CD_R$, initial policy $\policy_0$, horizon $T$, discount factor $\discount$, max data $\maxs$, max samples $\maxr$, probability $\perr$, number of rollout states $N$, Boolean Rejection, range $[a,b]$
   \STATE
   \STATE $\policy' = \policy_0$ (default: random), $n=0$, $m=0$
   \STATE $S_R \sim \CD_R^N$ (default: $N$=$\maxs$)
   \STATE for all $s \in S_R$, $a \in \Actions$ : $\hQpl(s,a)=0$, $\hDpl(s)=0$, $U(s)=0$, $c(s)=0$
   \REPEAT
    \STATE $\policy = \policy'$
    \STATE TrainingSet $=\varnothing$
    \WHILE{($n \le \maxs$ and $m \le \maxr$)}
        \STATE $s$ = {\sc SelectState}$(S_R,\hDpl,c,m)$
        \STATE $\tQpl$ = {\sc SampleState}$(s,\policy,T,\discount)$
        \STATE update $\hQpl(s,a)$, $\hDpl(s)$, and $U(s)$ using $\tQpl(s,a)$
        \STATE $c(s) = c(s) + 1$
        \STATE $m = m + 1$
%        \IF{$(\exists a^* : \Pr(\Dpl(s,a^*) < 0 \given \hDpl(s,a) > 0) < \perr) \vee (\exists a^* : \Pr(\Dpl(s,a^*) > 0 \given \hDpl(s,a) < 0) < \perr)$}
        \IF{$\left(2 c(s) \left(\hDpl(s)\right)^2  \geq (b_2-b_1)^2 \ln \left(\cfrac{|\Actions| - 1}{\perr}\right)\right)$}
            \STATE $n = n + 1$
            \STATE TrainingSet = TrainingSet $\cup\ \{(s,\haspls)^+\}$
            \STATE TrainingSet = TrainingSet $\cup\ \{(s,a)^-\}$, $\forall \; a \neq \haspls$
            \STATE $S_R = S_R - \{s\}$
            \STATE $S_R = S_R \cup \{s' \sim \CD_R\}$
        \ENDIF
        \IF{(Rejection)}
            \FOR{(each $s \in S_R$)}
                \IF{($U(s) < \sqrt{\ln m}$)}
                    \STATE $S_R = S_R - \{s\}$
                    \STATE $S_R = S_R \cup \{s' \sim \CD_R\}$
                \ENDIF
            \ENDFOR
        \ENDIF
    \ENDWHILE
   \STATE $\policy'$ = {\sc TrainClassifier}(TrainingSet)
   \UNTIL{($\policy \approx \policy'$)}
   \STATE{\textbf{return} $\policy$}
   \end{small}
\end{algorithmic}
\end{algorithm}

\section{Experiments}
\label{sec:experiments}

To demonstrate the performance of the proposed algorithm in practice
and to set the basis for comparison with {\rcpi}, we present
experimental results on two standard reinforcement learning domains,
namely the inverted pendulum and the mountain car.  In both domains,
we tried several settings of the various parameters related to state
sampling. However, we kept the learning parameters of the classifier
constant and used the new statistical test even for RCPI to filter out
their influence. In all cases, we measured the performance of the
resulting policies against the effort needed to derive them in terms
of number of samples.  Section~\ref{sec:pendulum}
and~\ref{sec:mountain-car} describe the learning domains, while the
exact evaluation method used and results are described in
Section~\ref{sec:evaluation}.

\subsection{Inverted Pendulum}
\label{sec:pendulum}
The {\em inverted pendulum} problem is to balance a pendulum of
unknown length and mass at the upright position by applying forces to
the cart it is attached to. Three actions are allowed: left force
(LF), right force (RF), or no force (NF), applying $-50N$, $+50N$, $0N$
respectively, with uniform noise in $[-10,10]$ added to the chosen
action.  Due to the noise in the problem, the return from any single
state-action pair is stochastic even though we are only employing
deterministic policies.  Had this not been the case, we would have
needed but a single sample from each state.  The state space is
continuous and consists of the vertical angle $\theta$ and the angular
velocity $\dot{\theta}$ of the pendulum. The transitions are governed
by the nonlinear dynamics of the system~\mycite{wang96} and depend on
the current state and the current control $u$:
\begin{eqnarray*}
\ddot{\theta} = \cfrac{g\sin(\theta)-\alpha ml(\dot{\theta})^2\sin(2\theta)/2-\alpha \cos(\theta)u}{4l/3-\alpha ml\cos^2(\theta)} \;,
\end{eqnarray*}
where $g$ is the gravity constant ($ g=9.8 m/s^2 $), $m$ is the mass
of the pendulum ($m=2.0$ kg), $M$ is the mass of the cart ($M=8.0$
kg), $l$ is the length of the pendulum ($l=0.5$ m), and
$\alpha=1/(m+M)$.  The simulation step is $0.1$ seconds, while the
control input is changed only at the beginning of each time step, and
is kept constant for its duration.

A reward of $0$ is given as long as the angle of the pendulum does not
exceed $\pi/2$ in absolute value (the pendulum is above the horizontal
line). An angle greater than $\policy/2$ signals the end of the
episode and a reward (penalty) of $-1$. The discount factor of the
process is set to $0.95$.  This forces the $Q$ value function to lie
in $[-1,0]$, so we can set $b_1=-1, b_2 = 0$ for this problem.

\subsection{Mountain-Car}
\label{sec:mountain-car}
The {\em mountain-car} problem is to drive an underpowered car from
the bottom of a valley between two mountains to the top of the
mountain on the right. The car is not powerful enough to climb any of
the hills directly from the bottom of the valley even at full
throttle; it must build some momentum by climbing first to the left
(moving away from the goal) and then to the right. Three actions are
allowed: forward throttle FT $(+1)$, reverse throttle RT $(-1)$, or no
throttle NT $(0)$. The original specification assumes a deterministic
transition model. To make the problem a little more challenging we
have added noise to all three actions; uniform noise in $[-0.2,0.2]$
is added to the chosen action's effect.  Again, due to the noise in
this problem, the returns are stochastic, thus necessitating the use
of multiple samples at each state.  The state space of the problem is
continuous and consists of the position $x$ and the velocity $\dot{x}$
of the car along the horizontal axis. The transitions are governed by
the simplified nonlinear dynamics of the
system~\mycite{Sutton+Barto:1998} and depend on the current state
$(x(t), \dot{x}(t))$ and the current (noisy) control $u(t)$:
\begin{eqnarray*}
  x(t+1) &=& \text{\sc Bound}_x[ x(t) + \dot{x}(t+1) ] \\
  \dot{x}(t+1) &=& \text{\sc Bound}_{\dot{x}}[ \dot{x}(t) + 0.001u(t) -0.0025\cos(3x(t)) ],
\end{eqnarray*}
where $\text{\sc Bound}_x$ is a function that keeps $x$ within
$[-1.2,0.5]$, while $\text{\sc Bound}_{\dot{x}}$ keeps $\dot{x}$
within $[-0.07,0.07]$. If the car hits the left bound of the position
$x$, the velocity $\dot{x}$ is set to zero.

For this problem, a penalty of $-1$ is given at each step as long as
the position of the car is below the right bound ($0.5$). As soon as
the car position hits the right bound, the episode ends successfully
and a reward of $0$ is given. The discount factor of the process is
set to $0.99$.  Choosing $[b_1, b_2]$ for this problem is trickier,
since without any further conditions, the value function lies in
$(-100,0]$.  However, the difference between $Q$ values for any state
does not vary much in practice.  That is, for most state and policy
combinations the initial action does not alter the final reward by
more than 1.  For this reason, we used $|b_1-b_2|=1$.

\subsection{Evaluation}
\label{sec:evaluation}
After a preliminary investigation we selected a multi-layer perceptron
with 10 hidden units as the classifier for representing policies and
stochastic gradient descent with a learning rate of $0.5$ for 25
iterations of training. Note that this is only one of numerous
choices.

The main problem was to devise an experiment to determine the
computational effort that would be required by each method to find an
optimal policy in practice.  This meant that for each method we would
have to simulate the process of manual tuning that a practitioner
would perform in order to discover optimal solutions.  A usual
practice is to perform a grid search in the space of hyper-parameters,
with multiple runs per grid point.  Assuming that the experimenter can
perform a number of such runs in parallel, we can then use the number
of solutions found after a certain number of samples taken by each
method as a practical metric of the sample complexity of the
algorithms.

More specifically, we tested all the proposed state selection methods
({\cnt}, {\ucba}, {\ucbb}, {\scel}) with {\rspi} and {\rcpi} for each
problem.  For all methods, we used the following sets of
hyper-parameters: $\maxr, \maxs \in \{10, 20, 50, 100, 200\}$, and
$\perr \in \{10^{-1}, 10^{-2}, 10^{-3}\}$ for the pendulum and $\perr
\in \{0.5, 10^{-1}, 10^{-2}\}$ for the car\footnote{In exploratory
  runs, it appeared particularly hard to obtain any samples at all for
  the car problem with $\perr=10^{-3}$ so we used $0.5$ instead.}.  We
performed 5 runs with different random seeds for each hyper-parameter
combination, for a total of 375 runs per method. After each run, the
resulting policy was tested for quality; a policy that could balance
the pendulum for at least 1000 steps or a policy that could drive the
car to the goal in under 75 steps from the starting position were
considered successful (practically optimal).

\begin{figure}[t]
\centerline{\includegraphics[width=0.95\columnwidth]{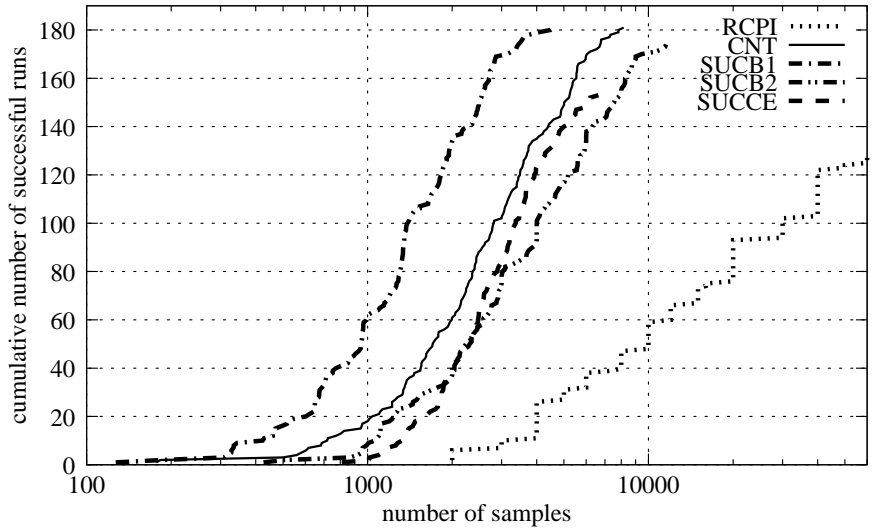}}
\caption{The cumulative distribution of successful runs (at least 1000
  steps of balancing) in the pendulum domain.}
\label{fig:pendulum}
\end{figure}

We report the cumulative distribution of successful policies found
against the number of samples (rollouts) used by each method, summed
over all runs. Formally, if $x$ is the number of samples along the
horizontal axis, we plot the measure $f(x) = \meas{\policy_i :
  \policy_i \text{ is successful}, m_i \le x}$, i.e. the horizontal
axis shows the least number of samples required to obtain the number
of successful runs shown in the vertical axis.  Effectively, the
figures show the number of samples $x$ required to obtain $f(x)$
near-optimal policies, if the experimenter was fortuitous enough to
select the appropriate hyper-parameters.

In more detail, Figure~\ref{fig:pendulum} shows the results for the
pendulum problem.  While the {\cnt}, {\ucba}, {\ucbb}, {\scel} methods
have approximately the same total number of successful runs, {\ucba}
clearly dominates, as after 4000 samples per run, it had already
obtained 180 successful policies; at that point it has six times more
chances of producing a successful policy compared to {\rcpi}. In the
contrary, {\rcpi} only managed to produce less than half the total
number of policies as the first method.  More importantly, none of its
runs had produced any successful policies at all with fewer than 2000
samples -- a point at which all the other methods were already making
significant progress.

Perhaps it is worthwhile noting at this point that the step-wise form
of the {\rcpi} plot is due to the fact that it was always terminating
sampling when all its rollouts had been exhausted.  The other methods
may also terminate whenever $\maxs$ good samples have been obtained.
Due to this reason, the plots might terminate at an earlier stage.

Similarly, Figure~\ref{fig:mountaincar} shows the results for the
mountain-car problem.  This time, we consider runs where less than 75
steps have been taken to reach the goal as successful.  Again, it is
clear that the proposed methods perform better than {\rcpi} as they
have higher chances of producing good policies with fewer samples.
Once again, {\ucba} exhibits an advantage over the other methods.
However, the differences between methods are slightly finer in this
domain.

It is interesting to note that the results were not very sensitive to
the actual value of $\perr$.  In fact we were usually able to obtain
good policies with quite large values (i.e. 0.5 in the mountain car
domain).  On the other hand, if one is working with a limited budget
of rollouts, a very small value of $\perr$, might make convergence
impossible, since there are not enough rollouts available to obtain
the best actions with the necessary confidence.  A similar thing
occurs when $|b_2 - b_1|$ is very large, as we noticed with initial
experiments with the mountain car where we had set them to $[-100,0]$.

Perhaps predictably, the most important parameter appeared to be
$\maxs$.  Below a certain threshold, no good policies could be found
by any algorithm.  This in general occurred when the total number of
good states at the end of an iteration were too few for the classifier
to be able to create an improved policy.

Of course, when $\perr$ and $\maxs$ are very large, there is no
guarantee for the performance of policy improvement, i.e. we cannot
bound the probability that all of the states will use the correct
action labels.  However, this does not appear to be a problem in
practice.  We posit two factors that may explain this.  Firstly, the
relatively low stochasticity of the problems: if the environments and
policies were deterministic, then a single sample would have been
enough to determine the optimal action at each state.  Secondly, the
smoothing influence of the classifier may be sufficient for policy
improvement even if some portion of the states sampled have incorrect
labels.

\begin{figure}[t]
\centerline{\includegraphics[width=0.95\columnwidth]{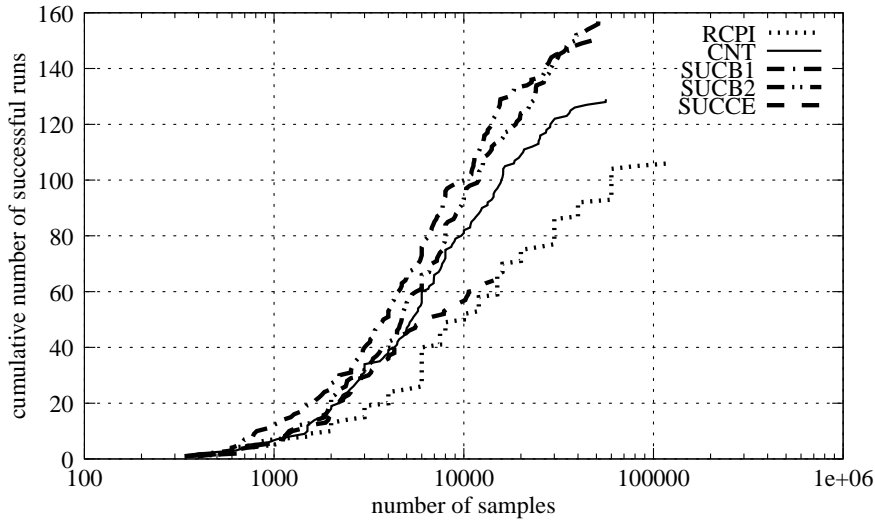}}
\caption{The cumulative distribution of successful runs (less than 75
  steps to reach the goal) in the mountain-car domain.}
\label{fig:mountaincar}
\end{figure}

Computational time does not give meaningful measurements in this
setting as the time taken for each trajectory depends on how many
steps pass until the episode terminates.  For some problems
(i.e. infinite-horizon problems with a finite horizon cutoff for the
rollout estimate), this may be constant, but for others the length of
time varies with the quality of the policy: in the pendulum domain,
policies run for longer as they improve, while the opposite occurs in
the mountain car problem.  For this reason we decided to only report
results of computational complexity.

We would finally like to note that our experiments with additional
rejections and replacements of states failed to produce a further
improvement.  However, such methods might be of use in environments
where the actions are indistinguishable in most states.

\section{Discussion}
\label{sec:conclusion}
The proposed approaches deliver equally good policies as those
produced by {\rcpi}, but with significantly less effort; in both
problems, there is up to an order of magnitude reduction in the number
of rollouts performed and thus in computational effort.  We thus
conclude that that the selective sampling approach can make rollout
algorithms much more practical, especially since similar approaches
have already demonstrated their effectiveness in the planning
domain~\mycite{ECML:Kocsis+Szepesvari:2006}.  However, some practical
obstacles remain - in particular, the choice of $\perr, \maxs,
|b_1-b_2|$ is not easy to determine a priori, especially when the
choice of classifier needs to be taken into account as well.  For
example, a nearest-neighbour classifier may not tolerate as large a
$\perr$ as a soft-margin support vector machine.  Unfortunately, at
this point, the choice of hyper-parameters can only be done via
laborious experimentation.  Even so, since the original algorithm
suffered from the same problem, the experimenter is at least assured
that not as much time will be spent until an optimal solution is
found, as our results show.

Currently the bandit algorithm variants employed for state rollout
selection are used in a heuristic manner.  However, in a companion
paper~\mycite{dimitrakakis+lagoudakis:ewrl2008}, we have analyzed the
whole policy iteration process and proved PAC-style bounds on the
progress that the {\cnt} method is guaranteed to make under certain
assumptions on the underlying MDP model.  We hope to extend this work
in the future in order to produce bandit-like algorithms that are
specifically tuned for this task.  Furthermore, we plan to address
rollout sampling both at the state and the action levels and focus our
attention on sophisticated state sampling distributions and on
exploiting sampled states for which no clear negative or positive
action examples are drawn, possibly by developing a variant of the
upper bound on trees
algorithm~\mycite{ECML:Kocsis+Szepesvari:2006}. An complementary
research route would be to integrate sampling procedures with fitting
algorithms that can use a single trajectory, such as
\mycite{Antos:Trajectory:mlj08}.

In summary, we have presented an approximate policy iteration scheme
for reinforcement learning, which relies on classification technology
for policy representation and learning and clever management of
resources for obtaining data. It is our belief that the synergy
between these two learning paradigms has still a lot to reveal to
machine learning researchers.

\section*{Acknowledgements}
  We would like to thank the reviewers for providing valuable feedback
  and Katerina Mitrokotsa for additional proofreading.  This work was
  partially supported by the ICIS-IAS project and the European
  Marie-Curie International Reintegration Grant MCIRG-CT-2006-044980
  awarded to Michail G.\ Lagoudakis.

\bibliographystyle{plainnat}
\bibliography{2008-ecml}

\begin{thebibliography}{16}
\providecommand{\natexlab}[1]{#1}
\providecommand{\url}[1]{\texttt{#1}}
\expandafter\ifx\csname urlstyle\endcsname\relax
  \providecommand{\doi}[1]{doi: #1}\else
  \providecommand{\doi}{doi: \begingroup \urlstyle{rm}\Url}\fi

\bibitem[Antos et~al.(2008)Antos, Szepesv\'{a}ri, and
  Munos]{Antos:Trajectory:mlj08}
Andr\'{a}s Antos, Csaba Szepesv\'{a}ri, and R\'{e}mi Munos.
\newblock Learning near-optimal policies with {B}ellman-residual minimization
  based fitted policy iteration and a single sample path.
\newblock \emph{Machine Learning}, 71\penalty0 (1):\penalty0 89--129, Apr 2008.
\newblock \doi{10.1007/s10994-007-5038-2}.

\bibitem[Auer et~al.(2002)Auer, Cesa-Bianchi, and Fischer]{auerMLJ02}
P.~Auer, N.~Cesa-Bianchi, and P.~Fischer.
\newblock Finite-time analysis of the multiarmed bandit problem.
\newblock \emph{Machine Learning Journal}, 47\penalty0 (2-3):\penalty0
  235--256, 2002.

\bibitem[Dimitrakakis and Lagoudakis(2008)]{dimitrakakis+lagoudakis:ewrl2008}
Christos Dimitrakakis and Michail Lagoudakis.
\newblock Algorithms and bounds for sampling-based approximate policy
  iteration.
\newblock (To be presented at the 8th {European Workshop on Reinforcement
  Learning}), 2008.

\bibitem[Even-Dar et~al.(2006)Even-Dar, Mannor, and Mansour]{evendarJMLR06}
Eyal Even-Dar, Shie Mannor, and Yishay Mansour.
\newblock Action elimination and stopping conditions for the multi-armed bandit
  and reinforcement learning problems.
\newblock \emph{Journal of Machine Learning Research}, 7:\penalty0 1079--1105,
  2006.
\newblock ISSN 1533-7928.

\bibitem[Fern et~al.(2004)Fern, Yoon, and Givan]{fern2004api}
A.~Fern, S.~Yoon, and R.~Givan.
\newblock {Approximate policy iteration with a policy language bias}.
\newblock \emph{Advances in Neural Information Processing Systems}, 16\penalty0
  (3), 2004.

\bibitem[Fern et~al.(2006)Fern, Yoon, and Givan]{fern2006api}
A.~Fern, S.~Yoon, and R.~Givan.
\newblock {Approximate policy iteration with a policy language bias: Solving
  relational Markov decision processes}.
\newblock \emph{Journal of Artificial Intelligence Research}, 25:\penalty0
  75--118, 2006.

\bibitem[Howard(1960)]{howard60}
Ronald~A. Howard.
\newblock \emph{Dynamic Programming and {M}arkov Processes}.
\newblock The MIT Press, Cambridge, Massachusetts, 1960.

\bibitem[Kocsis and Szepesv\'{a}ri(2006)]{ECML:Kocsis+Szepesvari:2006}
Levente Kocsis and Csaba Szepesv\'{a}ri.
\newblock Bandit based {M}onte-{C}arlo planning.
\newblock In \emph{Proceedings of the {European Conference on Machine
  Learning}}, 2006.

\bibitem[Lagoudakis and Parr(2003{\natexlab{a}})]{lagoudakis2003lsp}
M.G. Lagoudakis and R.~Parr.
\newblock {Least-Squares Policy Iteration}.
\newblock \emph{Journal of Machine Learning Research}, 4\penalty0 (6):\penalty0
  1107--1149, 2003{\natexlab{a}}.

\bibitem[Lagoudakis(2003)]{lagoudakisPhD03}
Michail~G. Lagoudakis.
\newblock \emph{Efficient Approximate Policy Iteration Methods for Sequential
  Decision Making in Reinforcement Learning}.
\newblock PhD thesis, Department of Computer Science, Duke University, May
  2003.

\bibitem[Lagoudakis and Parr(2003{\natexlab{b}})]{lagoudakisICML03}
Michail~G. Lagoudakis and Ronald Parr.
\newblock Reinforcement learning as classification: Leveraging modern
  classifiers.
\newblock In \emph{Proceedings of the 20th International Conference on Machine
  Learning (ICML)}, pages 424--431, Washington, DC, USA, August
  2003{\natexlab{b}}.

\bibitem[Langford and Zadrozny(2005)]{langfordICML05}
John Langford and Bianca Zadrozny.
\newblock Relating reinforcement learning performance to classification
  performance.
\newblock In \emph{Proceedings of the 22nd International Conference on Machine
  learning (ICML)}, pages 473--480, Bonn, Germany, 2005.
\newblock ISBN 1-59593-180-5.
\newblock \doi{http://doi.acm.org/10.1145/1102351.1102411}.

\bibitem[Rexakis and Lagoudakis(2008)]{rexakis+lagoudakis:ewrl2008}
Ioannis Rexakis and Michail Lagoudakis.
\newblock Classifier-based policy representation.
\newblock (To be presented at the 8th {European Workshop on Reinforcement
  Learning}), 2008.

\bibitem[Riedmiller(2005)]{riedmiller2005nfq}
M.~Riedmiller.
\newblock {Neural fitted Q iteration-first experiences with a data efficient
  neural reinforcement learning method}.
\newblock \emph{16th European Conference on Machine Learning}, pages 317--328,
  2005.

\bibitem[Sutton and Barto(1998)]{Sutton+Barto:1998}
Richard Sutton and Andrew Barto.
\newblock \emph{Reinforcement Learning: An Introduction}.
\newblock The {MIT} Press, Cambridge, Massachusetts, 1998.

\bibitem[Wang et~al.(1996)Wang, Tanaka, and Griffin]{wang96}
Hua~O. Wang, Kazuo Tanaka, and Michael~F. Griffin.
\newblock An approach to fuzzy control of nonlinear systems: Stability and
  design issues.
\newblock \emph{IEEE Transactions on Fuzzy Systems}, 4\penalty0 (1):\penalty0
  14--23, 1996.

\end{thebibliography}

\end{document}